\newtheorem{assumption}{Assumption}
\newtheorem{proposition}{Proposition}
\newtheorem{lemma}{Lemma}
\newtheorem{definition}{Definition}
\def\eE{\ensuremath{{\mathbb{E}}}}
\def\rR{\ensuremath{{\mathbb{R}}}}
\def\psim{\ensuremath{{p_{\textup{\tiny sim}}}}}
\def\qcl{\ensuremath{{q_{\textup{\tiny CL}}}}}
\def\qucl{\ensuremath{{q_{\textup{\tiny UCL}}}}}
\def\qscl{\ensuremath{{q_{\textup{\tiny SCL}}}}}
\def\qhucl{\ensuremath{{q_{\textup{\tiny H-UCL}}}}}
\def\qhscl{\ensuremath{{q_{\textup{\tiny H-SCL}}}}}
\def\qhcol{\ensuremath{{q_{\textup{\tiny Hcol}}}}}
\def\Hcalucl{\ensuremath{{\mathcal{H}_{\textup{\tiny UCL}}}}}
\def\Hcalscl{\ensuremath{{\mathcal{H}_{\textup{\tiny SCL}}}}}
\def\Hcalhucl{\ensuremath{{\mathcal{H}_{\textup{\tiny H-UCL}}}}}
\def\Hcalhscl{\ensuremath{{\mathcal{H}_{\textup{\tiny H-SCL}}}}}
\def\Hcalhcol{\ensuremath{{\mathcal{H}_{\textup{\tiny Hcol}}}}}
\def\alphascl{\ensuremath{{\alpha_{\textup{\tiny SCL}}}}}
\def\alphahucl{\ensuremath{{\alpha_{\textup{\tiny H-UCL}}}}}
\def\alphahscl{\ensuremath{{\alpha_{\textup{\tiny H-SCL}}}}}
\def\alphahcol{\ensuremath{{\alpha_{\textup{\tiny Hcol}}}}}
\newif\ifcomments
    \def\picomment#1{{$\!$\color{magenta} [PI: #1]}}
    \def\sacomment#1{{$\!$\color{blue} [SA: #1]}}
    \def\rjcomment#1{{$\!$\color{brown} [RJ: #1]}}
    \def\tncomment#1{{$\!$\color{brown} [TN: #1]}}
    \def\picomment#1{}
    \def\sacomment#1{}
    \def\rjcomment#1{}
    \def\tncomment#1{}
\def\BibTeX{{\rm B\kern-.05em{\sc i\kern-.025em b}\kern-.08em
    T\kern-.1667em\lower.7ex\hbox{E}\kern-.125emX}}
\begin{document}

\title{Supervised Contrastive Learning \\ with Hard Negative Samples}

\author{\IEEEauthorblockN{1\textsuperscript{st} Ruijie Jiang}
\IEEEauthorblockA{\textit{Dept. of ECE} \\
\textit{Tufts University}\\
Medford, USA \\
Ruijie.Jiang@tufts.edu}
\and
\IEEEauthorblockN{1\textsuperscript{st} Thuan Nguyen}
\IEEEauthorblockA{\textit{Dept. of CS} \\
\textit{Tufts University}\\
Medford, USA \\
Nguyen.Thuan@tufts.edu}
\and
\IEEEauthorblockN{2\textsuperscript{nd} Prakash Ishwar}
\IEEEauthorblockA{\textit{Dept. of ECE} \\
\textit{Boston University}\\
Boston, USA \\
pi@bu.edu}
\and
\IEEEauthorblockN{3\textsuperscript{rd} Shuchin Aeron}
\IEEEauthorblockA{\textit{Dept. of ECE} \\
\textit{Tufts University}\\
Medford, USA\\
shuchin@ece.tufts.edu}
}

\maketitle

\begin{abstract}
 Through minimization of an appropriate loss function such as the InfoNCE loss, contrastive learning (CL) learns a useful representation function by pulling positive samples close to each other while pushing negative samples far apart in the embedding space. The positive samples are typically created using ``label-preserving'' augmentations, i.e., domain-specific transformations of a given datum or anchor. In absence of class information, in unsupervised CL (UCL), the negative samples are typically chosen randomly and independently of the anchor from a preset negative sampling distribution over the entire dataset. This leads to class-collisions in UCL. Supervised CL (SCL), avoids this class collision by conditioning the negative sampling distribution to samples having labels different from that of the anchor. In hard-UCL (H-UCL), which has been shown to be an effective method to further enhance UCL, the negative sampling distribution is conditionally \emph{tilted}, by means of a \emph{hardening function}, towards samples that are closer to the anchor. Motivated by this, in this paper we propose hard-SCL (H-SCL) {wherein} the class conditional negative sampling distribution {is tilted} via a hardening function. Our simulation results confirm the utility of H-SCL over SCL with significant performance gains {in downstream classification tasks.} Analytically, we show that {in the} limit of infinite negative samples per anchor and a suitable assumption, the {H-SCL loss} is upper bounded by the {H-UCL loss}, thereby justifying the utility of H-UCL {for controlling} the H-SCL loss in the absence of label information. Through experiments on several datasets, we verify the assumption as well as the claimed inequality between H-UCL and H-SCL losses. We also provide a plausible scenario where H-SCL loss is lower bounded by UCL loss, indicating the limited utility of UCL in controlling the H-SCL loss. 
 \footnote{Our code is publicly available at \href{https://github.com/rjiang03/H-SCL}{https://github.com/rjiang03/H-SCL}.}

\end{abstract}

\begin{IEEEkeywords}
contrastive representation learning, hard negative sampling
\end{IEEEkeywords}

\section{Introduction}
Contrastive representation learning (CL) has received considerable attention in the machine learning literature as a method to learn  representations of data for use in downstream inference tasks, both in the absence of class information via unsupervised CL (UCL) \cite{van2018representation, arora2019theoretical}, as well as with known class labels via supervised CL (SCL) \cite{khosla2020supervised}. 
Contrastive learning has impacted a number of applications ranging from image classification \cite{tian2020contrastive,chen2020simple,huang2023accuracy}, text classification \cite{chen2022contrastnet,gouvea2022analyzing}, and natural language processing \cite{jiang2021interpretable,rethmeier2023primer}, to learning and inference with time-series data \cite{mohsenvand2020contrastive,nonnenmacher2022utilizing}. 
Contrastive learning  methods learn a representation map that pulls positive samples together while pushing the negative samples apart in the representation space 
by minimizing a suitable loss such as the widely-used InfoNCE loss {\cite{oord2018representation}}. 
Given an anchor datum, the positive samples are often constructed by applying domain-specific augmentations or transformations that are highly likely to preserve the latent label \cite{chen2020simple}. For example, crop, blur, rotation, and occlusion transformations for image data, and word masking for natural language processing (NLP) data. For a given augmentation mechanism, the performance of CL highly depends on the choice of the negative sampling mechanism that provides adequate \emph{contrast} with the given anchor. In UCL, the negative samples are typically chosen randomly and independently of the anchor from a preset negative sampling distribution over the entire dataset. This leads to class-collisions in UCL. Supervised CL (SCL), avoids this class collision via conditioning the negative sampling distribution on the label of the anchor. In hard-UCL (H-UCL), which has been empirically shown to be an effective method for further enhancing the effectiveness of UCL on downstream inference tasks \cite{robinson2021contrastive,tabassum2022hard,jiang2023hard}, the negative sampling distribution is conditionally \emph{tilted}, by means of a \emph{hardening function}, towards samples that are closer to the anchor {\cite{jiang2023neural}}. Motivated by the success of H-UCL, in this paper we propose hard-negative sampling for SCL, where we tilt the class conditional negative sampling distribution via a hardening function similarly to H-UCL.  To the best of our knowledge, this work is the first one that jointly combines the label information and the hard-negative sampling strategies to improve downstream performance. We make the following main contributions:
\begin{enumerate}
    \item Via extensive numerical comparisons on standard datasets we show that downstream performance of H-SCL is significantly higher compared to SCL. A preview of the results is shown in Fig.~\ref{fig: 1} and more results are provided in Sec.~\ref{sec: numerical results}.
    \item In {Sec.~\ref{sec:H-UCLvsH-SCL}}, for a general class of hardening functions recently introduced in {\cite{jiang2023neural}} for H-SCL and H-UCL, in the limit of negative samples going to infinity and under a suitable assumption, in Lemma~{\ref{lemma:hucl-scl-inequality}} we show that the H-SCL loss is upper bounded by the H-UCL loss. Since in our experiments H-SCL outperforms SCL we posit that this result takes a step towards theoretically justifying the utility of H-UCL over UCL, addressing a question left open in \cite{wu2020conditional}. 
    \item In Sec.~\ref{sec: numerical results}, we conduct experiments to numerically verify to what extent the assumption needed for the main theoretical result is satisfied. We also numerically verify Lemma \ref{lemma:hucl-scl-inequality}.
\end{enumerate}

\begin{figure}[t]
    \begin{center}
    \includegraphics[width=9 cm, height=6cm]{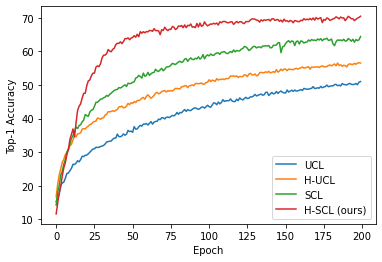}
 
    \caption{Top-1 accuracy (in \%) of UCL, H-UCL, SCL, and H-SCL on the CIFAR100 dataset.}
    \label{fig: 1}
    \end{center}
    
\end{figure}

The rest of the paper is organized as follows. In Sec.~\ref{sec: related work}, we discuss related work within the context of hard negative sampling for contrastive learning. Section~\ref{sec:problem_formulation} outlines the mathematical set-up for the UCL, SCL, H-UCL, and H-SCL scenarios. In Sec.~\ref{sec:H-UCLvsH-SCL} we state and prove the main theoretical result of this paper characterizing the relationship between the losses under the H-UCL and H-SCL settings along with the assumptions needed. Finally, we provide numerical results in Sec.~\ref{sec: numerical results} and conclude in Sec.~\ref{sec: conclusion}.

\section{Related work}
\label{sec: related work}

While design of positive sampling is also important in contrastive learning  \cite{tian2020makes}, in this work we focus mainly on negative sampling. For negative sample selection, recent works focus on designing ``hard" negative samples, \textit{i.e.,} negative samples coming from a different classes than the anchor, but close to the anchor. Robinson~\textit{et al.} \cite{robinson2021contrastive} derive a simple but practical hard-negative sampling strategy that improves the downstream task performance on image, graph, and text data. Tabassum~\textit{et al.} \cite{tabassum2022hard} introduce an algorithm called UnReMix which takes into account both the anchor similarity and the model uncertainty to select hard negative samples. Kalantidis \textit{et al.} \cite{kalantidis2020hard} propose a method called ``hard negative mixing" which synthesizes hard negative samples directly in the embedding space to improve the downstream-task performances. Although many studies observed that H-UCL outperforms UCL, there is no theoretical justification for this observation. Specifically, Wu \textit{et al.} \cite{wu2020conditional} observe that compared to the UCL loss, the H-UCL loss is, indeed, a looser lower bound of mutual information between two random variables derived from the dataset and raise the question ``why is a looser bound ever more useful" in practice? Our main theoretical result, Lemma~\ref{lemma:hucl-scl-inequality} takes a first step towards answering this question.
A recent work \cite{jiang2023neural} analyzes and establishes that the H-UCL loss is lower bounded by the UCL loss and the H-SCL loss\footnote{{Note that \cite{jiang2023neural} cites an earlier draft of this paper for H-SCL.}} is lower bounded by the SCL loss for general hard negative sampling strategies. In this paper we extend the results there and relate the H-UCL and H-SCL losses under a suitable technical assumption.

\begin{figure*}[h]
\centerline{\includegraphics[width=18cm, height=7.4cm]{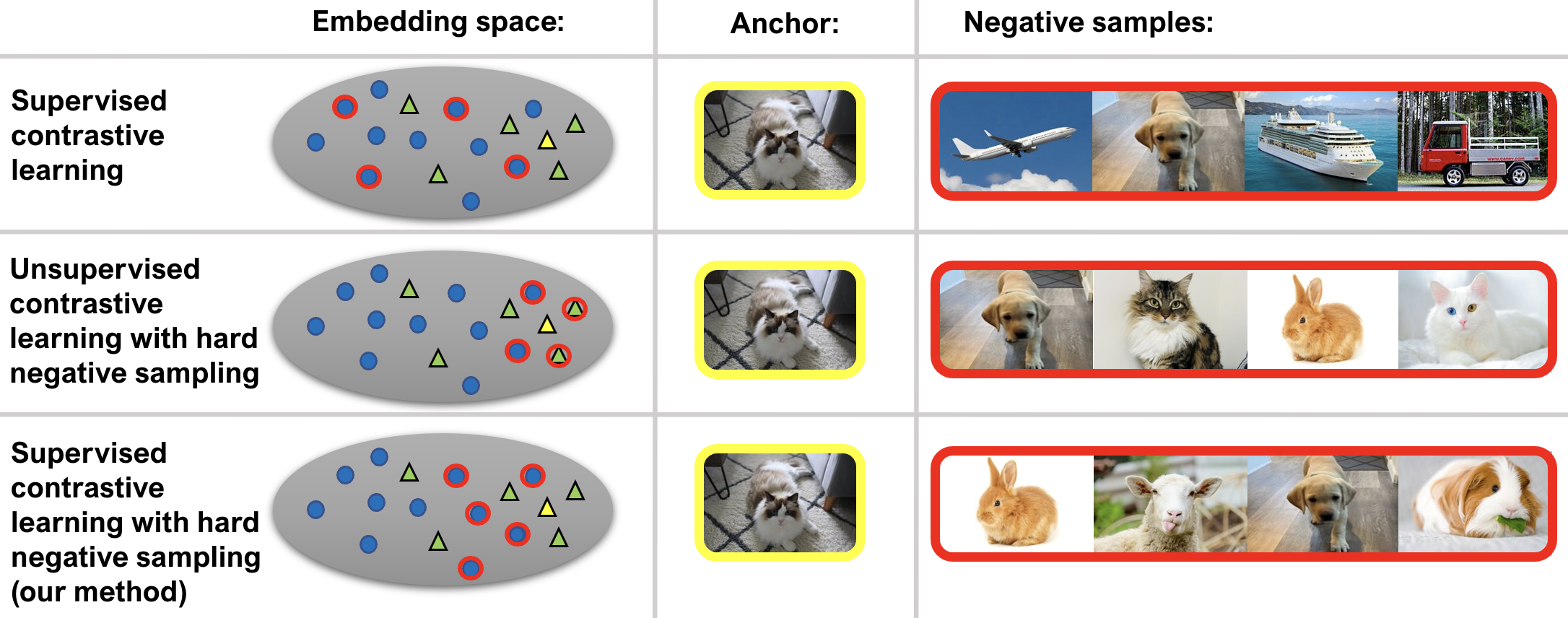}}
\caption{Schematic illustration of negative sampling strategies under H-UCL, SCL, and H-SCL settings in classifying species of cat. Top row (SCL): the negative samples (red rings) are randomly sampled from the set of circle samples which belongs to different classes of the anchor (yellow triangle). Middle row (H-UCL): the negative samples (red rings) are only selected from the neighbors of the anchor (yellow triangle). Since H-UCL prefers samples that are close to the anchor, it may select false negative samples (green triangles) which come from the same class as the anchor. Bottom row (H-SCL): the negative samples (red rings) are selected such that they are not only the ``true negative'' samples (circle samples) but also are close to the anchor (yellow triangle).}
\label{intro_frame}
\end{figure*}

\section{Problem Formulation}\label{sec:problem_formulation}

\noindent \textit{Notation and preliminaries:} We denote the \textit{sample space} (or \textit{input space}) by $\mathcal{X}$, the \textit{label space} by $\mathcal{Y}$, the unit-sphere in $\mathbb{R}^d$ by $S^{d-1}$, and the indicator function of an event $\mathcal{E}$ by $1(\mathcal{E})$. For integers $i, j$ with $i < j$, we define  $i:j := i, i+1, \ldots, j$ and $a_{i:j} := a_i, a_{i+1}, \ldots, a_j$. If $i > j$, $i:j$ and $a_{i:j}$ are ``null''. For any $f: \mathcal{X} \rightarrow S^{d-1}$ we define $g(x,x') := f^\top(x) f(x')/\gamma$, with $\gamma \in (0,\infty)$ a ``temperature parameter''. We measure the ``similarity'' of $u,v \in S^{d-1}$ by their inner product, with larger inner products corresponding to greater similarity. Since $||u-v||^2 = 2(1-u^\top v)$ for all $u,v \in S^{d-1}$, it follows that $u$ and $v$ are more similar if, and only if, they are more closer. For future reference, we note the following useful result:
\begin{proposition} \label{prop:tilting}
Let $p$ be a probability distribution over $\mathcal{Z}$ and $\rho: \mathcal{Z} \rightarrow [0,\infty)$ a nonnegative function such that $\alpha := \mathbb{E}_{z\sim p}[\rho(z)] \in (0,\infty)$. Then, 
\[
r(z) := \frac{\rho(z) p(z)}{\alpha}
\]
is also a probability distribution and for any measurable function $s: \mathcal{Z} \rightarrow \mathbb{R}$ we have
\[
\mathbb{E}_{z\sim r}[s(z)] = \frac{\mathbb{E}_{z\sim p}[\rho(z) s(z)]}{\alpha}
\]
\end{proposition}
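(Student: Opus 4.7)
The plan is to verify the two claims directly from the definition of $r$, treating the statement as a change-of-measure identity. First, I would establish that $r$ is a probability distribution. Nonnegativity of $r(z)$ is immediate because $\rho(z) \geq 0$, $p$ is a probability distribution, and $\alpha > 0$. To verify unit total mass, I would integrate (or sum, in the discrete case) $r(z)$ over $\mathcal{Z}$ with respect to the reference measure underlying $p$, pull the constant $1/\alpha$ outside, and recognize the remaining quantity as $\mathbb{E}_{z\sim p}[\rho(z)] = \alpha$, so the total mass equals $\alpha/\alpha = 1$.

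Second, for the expectation identity, I would start from the definition
\[
\mathbb{E}_{z\sim r}[s(z)] = \int s(z)\, r(z)\, dz,
\]
substitute $r(z) = \rho(z) p(z)/\alpha$, factor the constant $1/\alpha$ out of the integral, and recognize what remains as $\mathbb{E}_{z\sim p}[\rho(z) s(z)]$ by the definition of expectation under $p$. This gives the claimed formula in a single line.

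There is no real obstacle here; the result is essentially a one-line unwinding of definitions, and the only mild subtlety is measure-theoretic. To be fully rigorous for a general sample space $\mathcal{Z}$, one should interpret the integrals as Lebesgue integrals against the reference measure underlying $p$ and implicitly assume $\mathbb{E}_{z\sim p}[|s(z)|\rho(z)] < \infty$ so that both sides of the expectation identity are well defined. Under this mild integrability condition, the argument goes through identically in both the discrete and continuous cases, since the manipulations only use linearity of the integral and the definition of $\alpha$.
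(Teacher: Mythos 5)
Your proposal is correct and follows essentially the same route as the paper's proof: verify nonnegativity, integrate to confirm unit total mass equals $\alpha/\alpha = 1$, and then substitute $r = \rho p/\alpha$ into the definition of $\mathbb{E}_{z\sim r}[s(z)]$ and pull out $1/\alpha$. The only addition is your explicit remark on the integrability condition $\mathbb{E}_{z\sim p}[|s(z)|\rho(z)] < \infty$, which the paper leaves implicit.
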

\begin{proof}
Firstly, $r$ is nonnegative since $\rho$, $p$, and $\alpha$ are nonnegative. Since $\alpha$ is nonnegative, nonzero, and finite, we have
\begin{align*}
\int_{\mathcal{Z}} r(z) dz  =  \int_{\mathcal{Z}} \frac{\rho(z)}{\alpha} p(z) dz = \frac{\mathbb{E}_{z\sim p}[\rho(z)]}{\alpha} = 1.
\end{align*}
Therefore,
\begin{align*}
\mathbb{E}_{z\sim r}[s(z)] &= \int_{\mathcal{Z}} s(z) r(z) dz =  \int_{\mathcal{Z}} s(z) \frac{\rho(z)}{\alpha} p(z) dz \\  
&= \frac{\mathbb{E}_{z\sim p}[\rho(z) s(z)]}{\alpha}.
\end{align*}
\end{proof}

\subsection{Contrastive learning} \label{sec:contrastive_learning}

Contrastive learning assumes access to pairs of \textit{similar} samples $(x,x^+) \sim \psim(x,x^+)$, with $x$ referred to as the \textit{anchor} and $x^+$ as the \textit{positive sample}, and $k$ \textit{negative samples} 
$x^-_{1:k}$ that are conditionally independent and identically distributed (iid) given anchor $x$, with distribution $\qcl(x^-|x)$, and are presumably not similar (in a representation space) to the anchor $x$.

Let $y: \mathcal{X} \rightarrow \mathcal{Y}$ be a deterministic labeling function mapping the input space to a label space. For a sample $x\in\mathcal{X}$, $y(x)$ is the (groundtruth) label assigned to $x$. The labels are available during training in the supervised settings (SCL, H-SCL), but unknown in the unsupervised settings (UCL, H-UCL). 

The goal is to learn a representation function $f: \mathcal{X} \rightarrow S^{d-1}$, mapping the input space to the latent space of unit-norm vectors in $\mathbb{R}^d$, that minimizes a contrastive loss function 
\[
\mathcal{L}^{(k)}_{\textup{\tiny CL}}(f)  := \mathbb{E}_{(x,x^+)\sim \psim} \left[\mathbb{E}_{x^-_{1:k} \sim \,\textrm{iid}\,\qcl}
\left[\psi_k(x,x^+,x^-_{1:k},f)\right]\right]
\]
over some family of representation functions $\mathcal{F}$, e.g., all deep neural networks with a specified architecture. In this work we only consider the widely used InfoNCE contrastive loss function \cite{oord2018representation} given by
\[
\psi_k(x,x^+,x^-_{1:k},f) = \log\left(1+e^{-g(x,x^+)} \frac{1}{k}\sum_{j=1}^{k} e^{g(x,x^-_j)}\right).
\]

We will assume that for all $f \in \mathcal{F}$ and all $x \in \mathcal{X}$, we have 
$\mathbb{E}_{x^- \sim \qcl(x^-|x)} [e^{g(x,x^-)}] < \infty$. Then, in the limit as $k \rightarrow \infty$, by the strong law of large numbers,
\[
\psi_k(x,x^+,x^-_{1:k},f) \overunderset{a.s.}{k \rightarrow \infty}{\longrightarrow} \psi_\infty(x,x^+,f,\qcl)
\]
where
\[
\psi_\infty(x,x^+,f,\qcl) := \log\left(1+e^{-g(x,x^+)} \mathbb{E}_{x^- \sim \qcl} \left[e^{g(x,x^-)}\right]\right).
\]
Since all representation vectors have unit-norm, $|g(x,x^+)| = |f^\top(x)f(x^+)|/\gamma \leq 1/\gamma < \infty$. This implies that both $\psi_k(x,x^+,x^-_{1:k},f)$ and $\psi_\infty(x,x^+,f,\qcl)$ 
are globally bounded functions. From the dominated convergence theorem it follows that
\[
\mathcal{L}^{(k)}_{\textup{\tiny CL}}(f) \overunderset{k \rightarrow \infty}{}{\longrightarrow} \mathcal{L}^{(\infty)}_{\textup{\tiny CL}}(f)
\]
where
\[
\mathcal{L}^{(\infty)}_{\textup{\tiny CL}}(f) := \mathbb{E}_{(x,x^+)\sim \psim} \left[\psi_\infty(x,x^+,f,\qcl) \right] 
\]

\subsection{UCL and SCL settings} \label{sec:CL_settings}
Let $(x, x^+)$ be drawn from a joint distribution $\psim$. For a given $(v, v^+)$, the main difference between UCL, SCL, and H-UCL settings arises from the negative sampling distribution $\qcl(x^-|x)$ and the sampling strategy. 

\begin{enumerate}
    \item In the UCL setting, for all $x, x^+ \in \mathcal{X}$,
    \[
        \qcl(x^-|x) =  \qucl(x^-), 
    \]
for some probability distribution $\qucl(x^-)$ over $\mathcal{X}$. Thus in the UCL setting, the negative samples are selected independently of the anchor and positive samples. 
    \item In the SCL setting, for the given labeling function $y(\cdot)$ and all $x, x^- \in \mathcal{X}$,
    \[
         \qcl(x^-|x) =  \qscl(x^-|x) 
    \]
    where
    \begin{align*}
    \qscl(x^-|x) &:= \frac{1(y(x^-)\neq y(x))\,\qucl(x^-)}{\alphascl(x)}, \\
    \alphascl(x) &:= \eE_{x^-\sim \qucl}\left[1(y(x^-)\neq y(x))\right] 
    \end{align*}
    and we assume that $\alphascl > 0$ (this would be true if for all classes $y$, we have $\eE_{x^-\sim \qucl}\left[1(y(x^-)=  y\right] > 0)$.
    To enable comparison of the UCL and SCL settings, we assume that the $\qucl$ distribution used in the SCL setting is identical to the negative sampling distribution used in the UCL setting. Thus, the distribution of negative samples in the SCL setting is $\qucl$ conditioned on the event that the negative samples have labels \textit{different} from that of the anchor.  
\end{enumerate}

\subsection{H-UCL and H-SCL settings} \label{sec:H-CL_settings}

We consider the very general class of negative sample hardening mechanisms introduced in \cite{jiang2023neural} which is based on a \textbf{\textit{hardening function}} defined as follows.  
\begin{definition}[Hardening function]~\cite{jiang2023neural}
$\eta: \rR \rightarrow \rR$  is a hardening function if it is non-negative and nondecreasing throughout $\rR$.  
\end{definition}
\noindent Examples of hardening functions include the exponential tilting hardening function $\eta_{\rm \tiny exp}(t) := e^{\beta t}$, $\beta > 0,$ employed in \cite{robinson2021contrastive,jiang2023hard} and $\eta_{\rm \tiny thresh}(t) := 1(e^t \geq \tau)$ for some threshold $\tau$.
\begin{enumerate}
    \item In the H-UCL setting, for all $x,x^- \in \mathcal{X}$, all $f \in \mathcal{F}$, and a given hardening function $\eta(\cdot)$,
    \[
    \qcl(x^-|x) = \qhucl(x^-|x,f)
    \]
    where,
    \begin{align*}
    \qhucl(x^-|x,f) &:= \frac{\eta(g(x,x^-))\,\qucl(x^-)}{\alphahucl(x,f)}, \\
    \alphahucl(x,f) &:= \eE_{x^-\sim \qucl}\left[\eta(g(x,x^-))\right],
    \end{align*}
    and we assume that $\alphahucl(x,f) \in (0,\infty)$ for all $x\in \mathcal{X}$ and all $f \in \mathcal{F}$.
    The hardening function is nondecreasing. Thus negative samples $x^-$ that are more similar to the anchor $x$ in the representation space, i.e., $g(x,x^-)$ is large, are more likely to be sampled under $\qhucl$ than $\qucl$.
    
    \item In the H-SCL setting we utilize both hard-negative sampling and label information. This is the first key contribution of this paper. This is motivated by the effectiveness of hard-negative sampling strategies in H-UCL and the usefulness of label information in SCL. The main difference between H-SCL and other contrastive learning methods (UCL, H-UCL, and SCL) comes from the way the negative samples are selected. 
    
    Formally, in H-SCL, the positive pair $(x,x^+)$ is first sampled from $\psim$, i.e., using the same sampling strategy as in UCL, SCL, and H-UCL. Next, for the given labeling function $y(\cdot)$, all $x,x^+ \in \mathcal{X}$, all $f\in \mathcal{F}$, and a given hardening function $\eta(\cdot)$,
    \[
    \qcl(x^-|x) = \qhscl(x^-|x,f) 
    \]
    where,
    \begin{align*}
    &\qhscl(x^-|x,f) \\
    := &\frac{1(y(x^-)\neq y(x))\,\eta(g(x,x^-))\,\qucl(x^-)}{\alphahscl(x,f)} \\
    &\alphahscl(x,f) \\
    :=& \eE_{x^-\sim \qucl}\left[1(y(x^-)\neq y(x))\, \eta(g(x,x^-))\right],
    \end{align*}
    and we assume that $\alphahscl(x,f) \in (0,\infty)$ for all $x\in \mathcal{X}$ and all $f \in \mathcal{F}$.

    In other words, in the H-SCL setting, we only select a negative sample $x^-$ which simultaneously satisfies two conditions:
    \begin{itemize}
        \item[(i)] $x^-$'s label is different from $x$, i.e., $y(x^-) \neq y(x)$, and       
        \item[(ii)] $x^-$ is hard to discern from $x$.
    \end{itemize}  
\end{enumerate}

The top, middle, and bottom rows of Fig. \ref{intro_frame} illustrate the negative sampling strategies in SCL, H-UCL, and H-SCL, respectively. In Sec.~\ref{sec: numerical results}, we will numerically demonstrate the advantages of H-SCL compared to SCL, UCL, and H-UCL.

The contrastive losses in the UCL, SCL, H-UCL, and H-SCL settings are given by 
\begin{align}
\mathcal{L}^{(k)}_{\textup{\tiny UCL}}(f)  &:= \mathbb{E}_{(x,x^+)\sim \psim} \Big[ \mathbb{E}_{x^-_{1:k} \sim \,\textrm{iid}\,\qucl}
\Big[\psi_k(x,x^+,x^-_{1:k},f)\Big]\Big] \nonumber \\
\mathcal{L}^{(k)}_{\textup{\tiny SCL}}(f)  &:= \mathbb{E}_{(x,x^+)\sim \psim} \Big[ \mathbb{E}_{x^-_{1:k} \sim \,\textrm{iid}\,\qscl}
\Big[\psi_k(x,x^+,x^-_{1:k},f)\Big]\Big] \nonumber \\
\mathcal{L}^{(k)}_{\textup{\tiny H-UCL}}(f)  &:= \mathbb{E}_{(x,x^+)\sim \psim} \Big[ \mathbb{E}_{x^-_{1:k} \sim \,\textrm{iid}\,\qhucl}
\Big[\psi_k(x,x^+,x^-_{1:k},f)\Big]\Big] \nonumber \\
\mathcal{L}^{(k)}_{\textup{\tiny H-SCL}}(f)  &:= \mathbb{E}_{(x,x^+)\sim \psim} \Big[ \mathbb{E}_{x^-_{1:k} \sim \,\textrm{iid}\,\qhscl}
\Big[\psi_k(x,x^+,x^-_{1:k},f)\Big]\Big] \nonumber 
\end{align}
respectively and their limits as $k \rightarrow \infty$ by $\mathcal{L}^{(\infty)}_{\textup{\tiny UCL}}(f), \mathcal{L}^{(\infty)}_{\textup{\tiny SCL}}(f), \mathcal{L}^{(\infty)}_{\textup{\tiny H-UCL}}(f)$, and $\mathcal{L}^{(\infty)}_{\textup{\tiny H-SCL}}(f)$ respectively.

\section{Connection between H-SCL and H-UCL losses}
\label{sec:H-UCLvsH-SCL}
There is, in general, no known simple relationship between $\mathcal{L}_{\textup{\tiny H-SCL}}$ and $\mathcal{L}_{\textup{\tiny UCL}}$. In this section, however, we will show that under certain technical conditions $\mathcal{L}^{(\infty)}_{\textup{\tiny H-SCL}} \leq \mathcal{L}^{(\infty)}_{\textup{\tiny H-UCL}}$. This implies that when $k$ is large, minimizing $\mathcal{L}_{\textup{\tiny H-UCL}}$ can act as a proxy for minimizing $\mathcal{L}_{\textup{\tiny H-SCL}}$ whereas minimizing $\mathcal{L}_{\textup{\tiny UCL}}$ cannot in general. {As we will demonstrate in Sec.~\ref{sec: numerical results}, H-SCL empirically outperforms other contrastive learning methods.} Thus our theoretical results provide a plausible explanation for why H-UCL outperforms UCL in practice and partially answer an open question in \cite{wu2020conditional}.

For the given labeling function $y(\cdot)$, all $x,x^+ \in \mathcal{X}$, all $f\in \mathcal{F}$, and a given hardening function $\eta(\cdot)$, let
\begin{align*}
    \qhcol(x^-|x,f) &:= \frac{1(y(x^-)= y(x))\,\eta(g(x,x^-))\,\qucl(x^-)}{\alphahcol(x,f)} \\
    \alphahcol(x,f) &:= \eE_{x^-\sim \qucl}\left[1(y(x^-) =  y(x))\, \eta(g(x,x^-))\right],
\end{align*}
and assume that $\alphahcol(x,f) \in (0,\infty)$ for all $x\in \mathcal{X}$ and all $f \in \mathcal{F}$. Negative samples generated using $\qhcol$ have the same label as that of the anchor $y(x)$, i.e., we have a label collision, and are also hard to distinguish from the anchor $x$ in the representation space $\mathcal{F}$. 
%
\begin{proposition} \label{prop:Hucl-scl-col-decomp}
For all $x\in \mathcal{X}, f \in \mathcal{F}$ and any hardening function $\eta(\cdot)$ common to H-UCL, H-SCL, and Hcol  we have
\[
\alphahucl(x,f) = \alphahscl(x,f) + \alphahcol(x,f)
\]
\end{proposition}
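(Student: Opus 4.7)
The plan is to exploit the fact that for any fixed anchor $x$ and any sample $x^-$, the events $\{y(x^-) \neq y(x)\}$ and $\{y(x^-) = y(x)\}$ form a partition of the sample space $\mathcal{X}$, so the indicators satisfy the pointwise identity $1(y(x^-) \neq y(x)) + 1(y(x^-) = y(x)) = 1$. First I would start from the definition $\alpha_{\textup{\tiny HUCL}}(x,f) := \mathbb{E}_{x^- \sim p_{UCL}}[\eta(g(x,x^-))]$, then insert a $1$ multiplicatively inside the expectation and replace it by the sum of the two indicators.

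Next I would apply linearity of expectation to split the resulting expectation into two pieces, namely $\mathbb{E}_{x^- \sim p_{UCL}}[1(y(x^-) \neq y(x))\,\eta(g(x,x^-))]$ and $\mathbb{E}_{x^- \sim p_{UCL}}[1(y(x^-) = y(x))\,\eta(g(x,x^-))]$. By direct comparison with the definitions of $\alpha_{\textup{\tiny HSCL}}(x,f)$ and $\alpha_{\textup{\tiny Hcol}}(x,f)$ given just above the proposition and in Sec.~\ref{sec:H-CL_settings}, these two terms are exactly the desired normalization constants, which finishes the identity. The linearity step is legitimate because $\eta \geq 0$ and each expectation is assumed to lie in $(0,\infty)$, so no integrability issue arises.

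Honestly, there is no real obstacle here: the proposition is a one-line consequence of a partition-of-unity identity together with linearity of expectation, and the most one needs to check is that the hypothesis $\alpha_{\textup{\tiny HUCL}}, \alpha_{\textup{\tiny HSCL}}, \alpha_{\textup{\tiny Hcol}} \in (0,\infty)$ is enough to justify the split (which it clearly is, since nonnegativity of $\eta$ rules out $\infty - \infty$ cancellations). The value of the statement is not its difficulty but rather its role in the subsequent Lemma~\ref{lemma:hucl-scl-inequality}, where the decomposition will let us rewrite $p_{HUCL}$ as a convex combination of $p_{HSCL}$ and $p_{Hcol}$ via Proposition~\ref{prop:tilting}, and thereby relate $\mathcal{L}^{(\infty)}_{\textup{\tiny H-UCL}}$ to $\mathcal{L}^{(\infty)}_{\textup{\tiny H-SCL}}$.
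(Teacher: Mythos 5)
Your argument is exactly the paper's proof, just read in the opposite direction: the paper adds the defining expectations of $\alpha_{\textup{\tiny HSCL}}$ and $\alpha_{\textup{\tiny Hcol}}$ and observes that the indicators sum to $1$, while you insert the partition-of-unity identity into $\alpha_{\textup{\tiny HUCL}}$ and split by linearity. Same identity, same one-line computation, so this is correct and essentially identical to the paper's reasoning.
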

\begin{proof} Adding
 \[
\alphahscl(x,f) = \eE_{x^-\sim \qucl}\left[1(y(x^-)\neq y(x))\, \eta(g(x,x^-))\right]
\]   
\[
\alphahcol(x,f) = \eE_{x^-\sim \qucl}\left[1(y(x^-) =  y(x))\, \eta(g(x,x^-))\right] \Rightarrow
\]
\[
\alphahscl + \alphahcol = \eE_{x^-\sim \qucl}\left[\eta(g(x,x^-))\right] = \alphahucl(x,f).
\]
\end{proof}
We make the following technical assumption:
\begin{assumption} \label{asp:key}  For any given $f \in \mathcal{F}$ and hardening function $\eta(\cdot)$, for all $x \in \mathcal{X}$,
\[
\eE_{x^- \sim \qhcol}\Big[e^{g(x,x^-)}\Big] \geq \eE_{x^- \sim \qhscl}\Big[e^{g(x,x^-)}\Big].
\]
\end{assumption}
Assumption~\ref{asp:key} asserts that in expectation, the exponentiated similarity (respectively, distance) between the anchor $x$ and hard-to-distinguish samples sharing the anchor's label is greater (respectively, smaller) than the exponentiated similarity (respectively, distance) between the anchor $x$ and hard-to-distinguish samples that also have a different label from the anchor.
%

In practice, Assumption~\ref{asp:key} is reasonable if the representation function $f$ is a ``good'' mapping, i.e., under the mapping $f$, samples having the same label are pulled closer to each other whereas samples having different labels are pushed far apart. {In Sec.~\ref{sec: numerical results}, we will provide some empirical evidence for Assumption~\ref{asp:key}.}

\begin{lemma}
\label{lemma:hucl-scl-inequality}
Under Assumption \ref{asp:key}, 
\begin{equation}
    \mathcal{L}^{(\infty)}_{\textup{ \tiny H-UCL}}\, \geq  \mathcal{L}^{(\infty)}_{\textup{ \tiny H-SCL}}. \nonumber
\end{equation}
\end{lemma}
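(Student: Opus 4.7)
The plan is to reduce the claim about expected losses to a pointwise inequality on the inner expected exponentials, and then exploit the decomposition given by Proposition~\ref{prop:Hucl-scl-col-decomp} together with Assumption~\ref{asp:key} to conclude. Concretely, since $\log(1 + e^{-g(x,x^+)} y)$ is monotonically nondecreasing in $y \geq 0$ (because $e^{-g(x,x^+)} > 0$), it suffices to show that for every fixed $x \in \mathcal{X}$ and every $f \in \mathcal{F}$,
\[
\mathbb{E}_{x^- \sim p_{HUCL}(\cdot|x,f)}\bigl[e^{g(x,x^-)}\bigr] \;\geq\; \mathbb{E}_{x^- \sim p_{HSCL}(\cdot|x,f)}\bigl[e^{g(x,x^-)}\bigr],
\]
since the outer expectation over $(x,x^+) \sim p_{sim}$ preserves the pointwise inequality.

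Next, I would rewrite the H-UCL expectation by splitting the indicator $1 = 1(y(x^-)=y(x)) + 1(y(x^-)\neq y(x))$ under $p_{UCL}$. Applying Proposition~\ref{prop:tilting} to each of the three densities $p_{HUCL}$, $p_{HSCL}$, and $p_{Hcol}$ with $\rho(x^-) = \eta(g(x,x^-))$ or $\rho(x^-) = \eta(g(x,x^-))\,1(y(x^-)\!=\!\text{or}\!\neq y(x))$, I can express the three expectations in terms of a common $p_{UCL}$-expectation. Combining this with Proposition~\ref{prop:Hucl-scl-col-decomp}, I obtain the convex-combination identity
\[
\mathbb{E}_{x^- \sim p_{HUCL}}\bigl[e^{g(x,x^-)}\bigr]
= \frac{\alpha_{\textup{\tiny HSCL}}(x,f)\cdot A + \alpha_{\textup{\tiny Hcol}}(x,f)\cdot B}{\alpha_{\textup{\tiny HSCL}}(x,f) + \alpha_{\textup{\tiny Hcol}}(x,f)},
\]
where $A := \mathbb{E}_{x^- \sim p_{HSCL}}[e^{g(x,x^-)}]$ and $B := \mathbb{E}_{x^- \sim p_{Hcol}}[e^{g(x,x^-)}]$.

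Finally, Assumption~\ref{asp:key} states exactly that $B \geq A$, so the weighted average on the right-hand side is at least $A$:
\[
\mathbb{E}_{x^- \sim p_{HUCL}}\bigl[e^{g(x,x^-)}\bigr] \;\geq\; A \;=\; \mathbb{E}_{x^- \sim p_{HSCL}}\bigl[e^{g(x,x^-)}\bigr],
\]
which together with the monotonicity observation in the first step completes the proof. I do not anticipate a serious obstacle here: the only nontrivial ingredient is Assumption~\ref{asp:key}, which is assumed; the rest is bookkeeping via the tilting identity and the additive decomposition of $\alpha_{\textup{\tiny HUCL}}$. The one place to be slightly careful is verifying that all normalizers $\alpha_{\textup{\tiny HUCL}}, \alpha_{\textup{\tiny HSCL}}, \alpha_{\textup{\tiny Hcol}}$ are finite and strictly positive, which is guaranteed by the standing assumptions accompanying each definition.
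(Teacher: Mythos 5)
Your proposal is correct and follows essentially the same route as the paper's proof: reduce to the pointwise inequality on $\mathbb{E}[e^{g(x,x^-)}]$ via monotonicity of $\log(1+e^{-g(x,x^+)}\,\cdot\,)$, split the indicator, apply the tilting identity (Proposition~\ref{prop:tilting}) to identify the convex-combination decomposition with weights $\alpha_{\textup{\tiny HSCL}}/\alpha_{\textup{\tiny HUCL}}$ and $\alpha_{\textup{\tiny Hcol}}/\alpha_{\textup{\tiny HUCL}}$, and invoke Assumption~\ref{asp:key} together with Proposition~\ref{prop:Hucl-scl-col-decomp} to conclude. The convex-combination framing is a mild repackaging of the paper's line-by-line manipulation, but the ingredients and their order are the same.
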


\begin{proof} 
We will show that for any given $f \in \mathcal{F}$, hardening function $\eta(\cdot)$, and all $x \in \mathcal{X}$, 
\begin{eqnarray*}
\eE_{x^- \sim \qhucl} \big[ e^{g(x,x^-)} \big] 
\geq \eE_{x^- \sim \qhscl }  \big[ e^{g(x,x^-)}  \big]
\end{eqnarray*}
from which the desired inequality would follow since $\log(\cdot)$ is a strictly increasing function. Using Proposition~\ref{prop:tilting}, 
\begin{align}
&\eE_{x^- \sim \qhucl} \big[ e^{g(x,x^-)} \big]  \nonumber \\
&= \frac{\eE_{x^- \sim \qucl}[\eta(g(x,x^-))e^{g(x,x^-)}]}{\alphahucl} \nonumber \\
&= \frac{\eE_{x^- \sim \qucl}[1(y(x) = y(x^-)\eta(g(x,x^-))e^{g(x,x^-)}]}{\alphahucl} \nonumber \\
&{}\ \ \ \ + \frac{\eE_{x^- \sim \qucl}[1(y(x) \neq y(x^-)\eta(g(x,x^-))e^{g(x,x^-)}]}{\alphahucl} \nonumber \\
&= \frac{\alphahcol}{\alphahucl} \eE_{x^- \sim \qhcol} \big[ e^{g(x,x^-)} \big] 
+ \frac{\alphahscl}{\alphahucl} \eE_{x^- \sim \qhscl} \big[ e^{g(x,x^-)} \big] \nonumber \\
&\geq \frac{\alphahcol}{\alphahucl} \eE_{x^- \sim \qhscl} \big[ e^{g(x,x^-)} \big] 
+ \frac{\alphahscl}{\alphahucl} \eE_{x^- \sim \qhscl} \big[ e^{g(x,x^-)} \big]  \label{eq:bykeyassumption}\\
&= \left(\frac{\alphahcol + \alphahscl}{\alphahucl}\right) \eE_{x^- \sim \qhscl} \big[ e^{g(x,x^-)} \big] \nonumber \\
&= \eE_{x^- \sim \qhscl} \big[ e^{g(x,x^-)} \big] \label{eq:byalphadecomp}
\end{align}
where inequality (\ref{eq:bykeyassumption}) follows from Assumption~\ref{asp:key} and equality (\ref{eq:byalphadecomp}) follows from Proposition~\ref{prop:Hucl-scl-col-decomp}.
\end{proof}

Lemma~\ref{lemma:hucl-scl-inequality} shows that the loss function of H-UCL can be used as a proxy to optimize the loss function of H-SCL under certain conditions. Lemma~\ref{lemma:hucl-scl-inequality} requires $k \rightarrow + \infty$ which suggests using a large value of $k$ in practice. This is consistent with the numerical results in \cite{he2020momentum,wu2020conditional,zhuang2019local,laskin2020curl} where large values of $k$ lead to higher accuracies in downstream tasks.   

We shall now loosely explain why the loss function of UCL ($\mathcal{L}_{\textup{ \tiny UCL}}$) cannot upper bound the loss function of H-SCL ($\mathcal{L}_{\textup{ \tiny H-SCL}}$) for all hardening functions. Suppose we have a hardening function such that for any given $f \in \mathcal{F}$ and any $x \in \mathcal{X}$ we have
\[
\eE_{x^- \sim \qhscl} \big[ e^{g(x,x^-)} \big] 
\geq \eE_{x^- \sim \qucl }  \big[ e^{g(x,x^-)}  \big].
\]
Then, $\mathcal{L}_{\textup{ \tiny H-SCL}} \geq \mathcal{L}_{\textup{ \tiny UCL}}$ since $\log(\cdot)$ is a strictly increasing function. To design such a hardening function, let
\[
\tau(x,f) := \eE_{x^- \sim \qucl }  \big[ e^{g(x,x^-)}  \big]
\]
and define 
\[
\eta(g(x,x^-)) = 1(e^{g(x,x^-)} \geq \tau).
\]
Then, all the hard negative samples generated by $\qhscl$ will, by design, satisfy the inequality $e^{g(x,x^-)} \geq \tau$. If there is a nonzero probability of at least some of them having a label different from that of the anchor, then we would have  $\eE_{x^- \sim \qhscl} \big[ e^{g(x,x^-)} \big]  \geq \tau(x,f) = \eE_{x^- \sim \qucl }  \big[ e^{g(x,x^-)}  \big]$.

\subsection{Thresholded similarity hardening function} 
\label{sec:threshold_harden} 

To gain better intuition for the theoretical concepts and results we have developed, we now consider a special hardening function based on thresholding the similarity between samples, specifically,
\[
\eta_{\rm \tiny thresh}(t) := 1(e^t \geq \tau)
\]
where $\tau > 0$ is a threshold that controls the hardness of negative samples. A large value of $\tau$ makes it harder to distinguish between the anchor and negative samples in the representation space.
For this hardening function, 
\begin{align*}
\alphascl &= \qucl(\Hcalscl) \\
\alphahucl &= \qucl(\Hcalhucl) \\
\alphahscl &=  \qucl(\Hcalhscl) \\
\alphahcol &= \qucl(\Hcalhcol)
\end{align*}
where for any set $\mathcal{H}$,
\[
\qucl(\mathcal{H}) := \text{Pr}_{x^-\sim\qucl}(x^- \in \mathcal{H}),
\]
and
\begin{align*}
\Hcalscl(x) &:= \big\{x^- \in \mathcal{X} \,|\, y(x^-) \neq y(x) \big\}. \\
\Hcalhucl(x,f,\tau) &:= \big\{x^- \in \mathcal{X} | e^{g(x,x^-)} \geq \tau  \big\} \\
\Hcalhscl(x,f,\tau) &:= \Hcalscl(x) \cap \Hcalhucl(x,f,\tau) \\
\Hcalhcol(x,f,\tau) &:= \Hcalscl^{\!\!\!\!\!\!\!c\ \;}(x) \cap \Hcalhucl(x,f,\tau)
\end{align*}
and ``$^c$'' denotes set complement. The negative sampling distributions for SCL, H-UCL, H-SCL, and Hcol are given by
\begin{align*}
    \qscl(x^-|x) = 
    \begin{cases} 
        \frac{\qucl(x^-)}{\qucl(\Hcalscl)} &x^- \in \Hcalscl(v) \\
        0 & \text{otherwise}
    \end{cases}
\end{align*}
\begin{align*}
    \qhucl(x^-|x,f,\tau) = 
    \begin{cases} 
        \frac{\qucl(x^-)}{\qucl(\Hcalhucl)} &x^- \in \Hcalhucl \\
        0 & \text{otherwise}
    \end{cases}
\end{align*}
\begin{align*}
    \qhscl(x^-|x,f,\tau) = 
    \begin{cases} 
        \frac{\qucl(x^-)}{\qucl(\Hcalhscl)} & \text{if } x^- \in \Hcalhscl \\
        0 & \text{otherwise}
    \end{cases}
\end{align*}
\begin{align*}
    \qhcol(x^-|x,f,\tau) = 
    \begin{cases} 
        \frac{\qucl(x^-)}{\qucl(\Hcalhcol)} & x^- \in \Hcalhcol \\
        0 & \text{otherwise}
    \end{cases}
\end{align*}
respectively.
We note that $\Hcalscl(x)$ is the set of all samples having labels different from that of the anchor $x$. 
The set $\Hcalhucl(x,f,\tau)$ consists of all samples for which $e^{g(x,x^-)} \geq \tau$, i.e., samples whose similarity with the anchor $x$ in the representation space is greater than or equal to $\gamma \tau$, {where $\gamma$ is the temperature parameter of $g$,} 
and are therefore harder to distinguish from the anchor than other samples.
The set $\Hcalhscl(x,f,\tau)$ consists of all samples having labels different from that of the anchor $x$ and for which $e^{g(x,x^-)} \geq \tau$, i.e., they are also hard to distinguish from the anchor in the representation space.
Finally, the set $\Hcalhcol(x,f,\tau)$ consists of all samples having the \textit{same} label as the anchor's, i.e.,  $y(x)$, but are hard to distinguish from the anchor in the representation space, specifically, $e^{g(x,x^-)} \geq \tau$. 

Figure~\ref{fig: 2} illustrates the relationships between $\Hcalucl, \Hcalscl, \Hcalhucl, \Hcalhscl, \Hcalhcol$.

\begin{figure}[t]
    \begin{center}
    \includegraphics[width=8.5 cm, height=5.2cm]{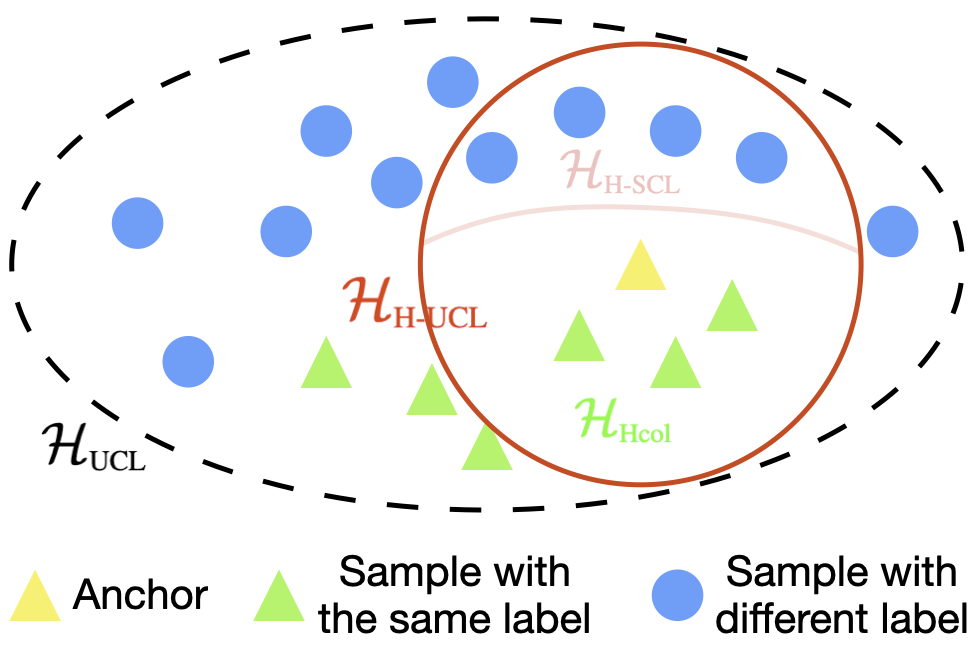}
    \caption{ 
    For a given representation function $f$, anchor $x$ (yellow triangle) 
    and a threshold $\tau$, $\Hcalhucl(x,f,\tau)$ contains all the samples $x^-$ that satisfy the constraint $e^{g(x,x^-)} \geq \tau$ (samples within the solid-line circle in the figure) that  which are difficult to distinguish from the anchor in the representation space. $\Hcalhscl(x,f,\tau)$ is a subset of $\Hcalhucl(x,f,\tau)$ and only contains samples that are hard to distinguish from the anchor and have labels different from the anchor's (blue discs within the solid-line circle). $\Hcalhcol(x,f,\tau)$ only contains samples that are hard to distinguish from the anchor and have the same label as the anchor (triangles within the solid-line circle). The set $\Hcalscl(x)$ consists of all samples having labels different from the anchor's, irrespective of whether they are easy or hard to distinguish from the anchor (all blue discs). 
    } 
    \label{fig: 2}
    \end{center}
\vglue -4ex    
\end{figure}

\section{Numerical results}
\label{sec: numerical results}

In this section, we {first} demonstrate the efficiency of H-SCL over other competing methods on four image datasets. In addition, we also empirically verify Assumption~\ref{asp:key} which supports our claim that  $\mathcal{L}^{(k)}_{\textup{\tiny H-SCL}} \leq \mathcal{L}^{(k)}_{\textup{\tiny H-UCL}}$. {Since we are using stochastic gradient descent methods, in all the experiments for each batch, $k$ is chosen to be the number of all negative samples in a given batch, which may vary across batches and with batch sizes for different datasets.} Then we present additional experimental results for 5 graph datasets.

%

\begin{figure*}[ht]
    \centering
    \begin{minipage}[b]{0.24\textwidth}
        \includegraphics[width=\textwidth]{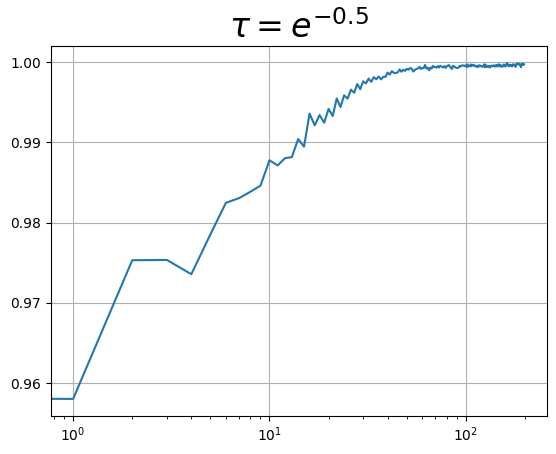}
    \end{minipage}
    \hfill
    \begin{minipage}[b]{0.24\textwidth}
        \includegraphics[width=\textwidth]{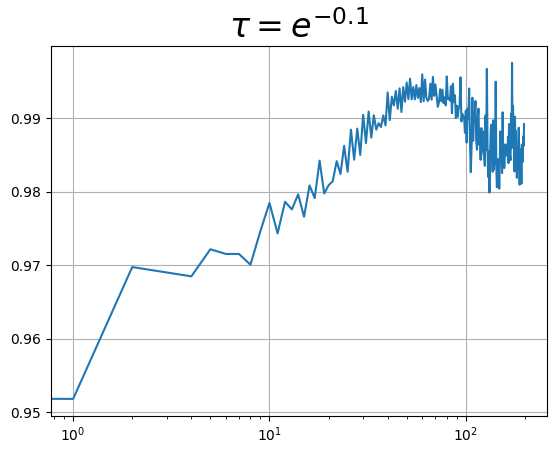}
    \end{minipage}
    \hfill
    \begin{minipage}[b]{0.245\textwidth}
        \includegraphics[ width=\textwidth]{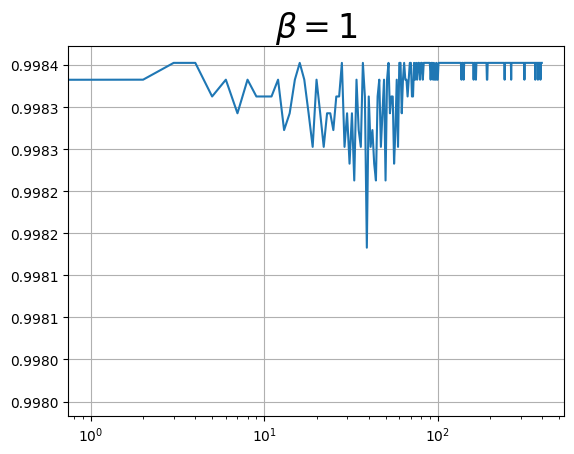}
    \end{minipage}
    \hfill
    \begin{minipage}[b]{0.245\textwidth}
        \includegraphics[width=\textwidth]{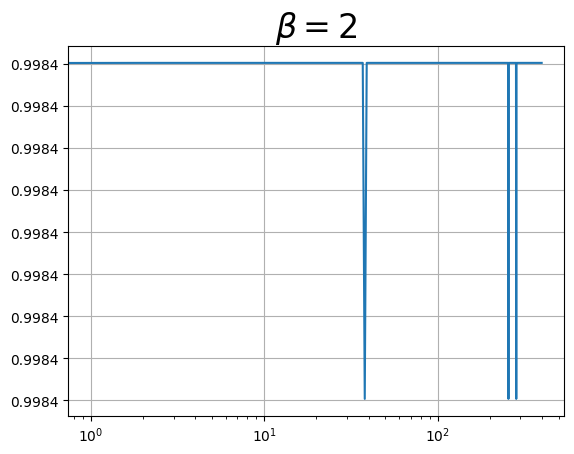}
    \end{minipage}
     \caption{{Fraction of anchors satisfying Assumption~\ref{asp:key} at the end of each epoch} for $\tau = e^{-0.5}$ (first figure), $\tau = e^{-0.1}$ (second figure), $\beta = 1$ (third figure) and $\beta = 2$ (forth figure).}\label{fig:asp}
\end{figure*}

\begin{figure*}[ht]
    \centering
    \begin{minipage}[b]{0.245\textwidth}
        \includegraphics[width=\textwidth]{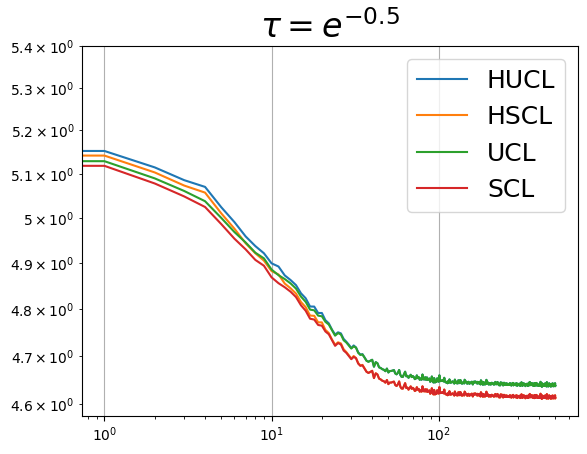}
    \end{minipage}
    \hfill
    \begin{minipage}[b]{0.245\textwidth}
        \includegraphics[width=\textwidth]{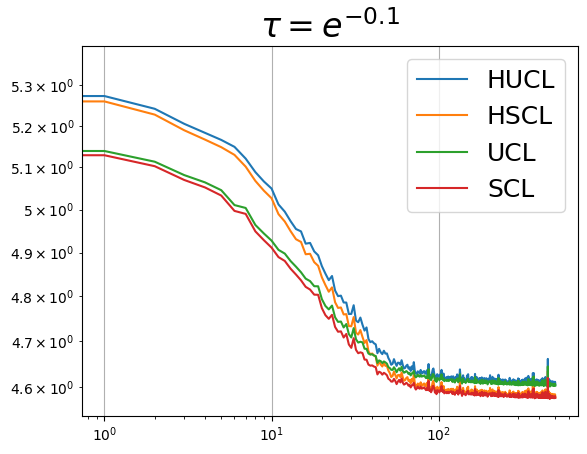}
    \end{minipage}
    \hfill
    \begin{minipage}[b]{0.24\textwidth}
        \includegraphics[ width=\textwidth]{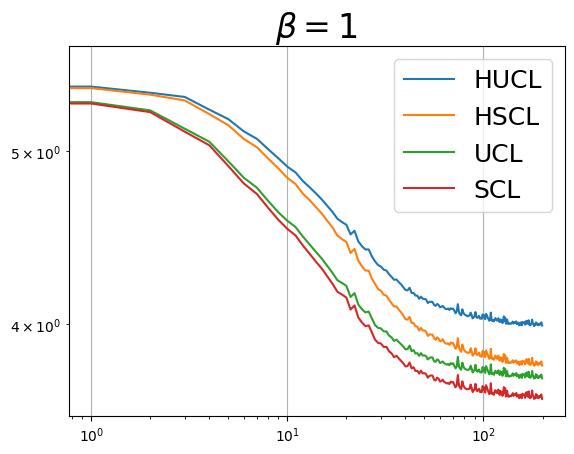}
    \end{minipage}
    \hfill
    \begin{minipage}[b]{0.24\textwidth}
        \includegraphics[width=\textwidth]{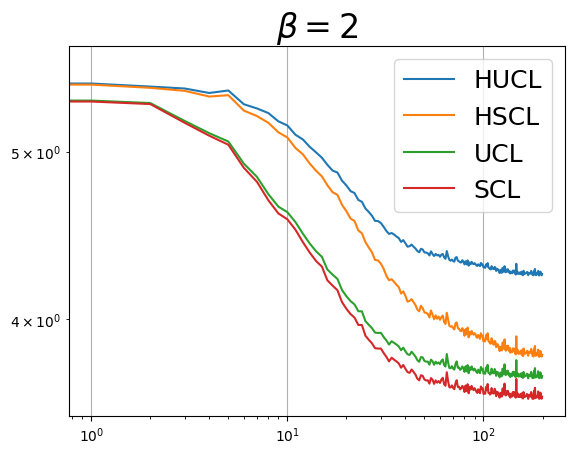}
    \end{minipage}
     \caption{Comparison of four different loss functions across epochs, with $\tau = e^{-0.5}$ (first figure), $\tau = e^{-0.1}$ (second figure), $\beta = 1$ (third figure) and $\beta = 2$ (forth figure)}\label{fig:loss}
\end{figure*}

\subsection{Image datasets}
We evaluated UCL, H-UCL, SCL, and H-SCL on the STL10 
\cite{coates2011analysis}, CIFAR10, and CIFAR100  \cite{krizhevsky2009learning} datasets for visual representation learning which contain images with 10, 10, and 100 classes, respectively. 

\textbf{Experiment setup:}
We adopt the simulation set-up and practical implementation from \cite{khosla2020supervised}. For UCL and H-UCL, the positive samples are generated using augmentations (crop, flip, color-jitter, and Gaussian noise) only, while for the SCL and H-SCL the positive sample are generated using both the augmentations and the label information. We employ two different methods for selecting hard negative samples via hardening functions. In the first method we use $\eta_{\rm \tiny thresh}(t) = 1(e^{t} \geq \tau)$ for sampling hard-negatives and call it the H-SCL$(\tau)$ method and in the second method we use $\eta_{\rm \tiny exp}(t) = e^{\beta t}$ for sampling hard-negatives and call it the  H-SCL$(\beta)$ method. For both methods we use the large negative sample limit of the Info-NCE loss function: 
\begin{align}
\label{eq:NCE_loss_exp}
\log\left(1+ M \,\, e^{-g(x,x^+)} \mathbb{E}_{x^- \sim \qcl} \left[e^{g(x,x^-)}\right]\right),
\end{align}
where we approximate the inner expectation by averaging over all the negative samples in a given batch.
%
%
The additional parameter $M$ is a positive scalar that is used in benchmark implementations in \cite{chen2020simple,robinson2021contrastive,khosla2020supervised} and for a fair comparison we include it in our simulations.

We use the simCLR set-up \cite{chen2020simple} with the projection head dimension of $128$ with ResNet-50 \cite{He2015} architecture to parameterize the representation function. 
%
After fixing the representation function generated by the trained ResNet-50, we train a linear classifier using {the available labeled data for each dataset} and report the classification accuracies.

\begin{table}[ht]
\centering
\caption{The best accuracies of tested methods on four datasets (in \%).}
\label{tbl:image}
\begin{tabular}{|c|c|c|c|c|}
\hline
\textbf{Method} & \textbf{STL10} & \textbf{CIFAR10} & \textbf{CIFAR100} & \textbf{TinyImageNet} \\
\hline
{UCL~\cite{chen2020simple}} & $64.36 \pm 0.92$ & 89.16 & 64.02 & 53.40 \\
{H-UCL~\cite{robinson2021contrastive}} & $67.82 \pm 1.41$ & 90.35 & 67.77 & 56.22 \\
{SCL} & $68.28 \pm 0.92$ & 93.46 & 71.68 & 62.06 \\
{H-SCL($\beta$)}& \textbf{72.52 $\pm$ 1.94} & \textbf{93.98} & \textbf{75.11} & \textbf{65.39} \\
{H-SCL($\tau$)}& 71.02 $\pm$ 2.03 & $92.95$ & $72.97$& $63.84$ \\
\hline
\end{tabular}
\end{table}

\textbf{Training procedure:} 
All models are trained for 200 epochs with a batch size of 512. We use the Adam optimizer with a learning rate of 0.001 and weight decay of 10$^{-6}$. We set $\gamma$ to $0.5$ following~\cite{robinson2021contrastive} for a fair comparison. \textcolor{black}{For the H-SCL($\tau$) method, we set $\tau = e^{l(\text{start, end, epoch})/\gamma}$ , where $l(\text{start, end, epoch})$ is a function defined as}
\begin{align}
    l(\text{start, end, epoch}) = \text{start} + \frac{\text{epoch}-1}{199}(\text{end}-\text{start})
\end{align}
we searched start from a set of$\{-0.5, -0.3, -0.1\}$, the end from a set $\{-0.1, 0, 0.1\}$ testing all nine possible combinations. 
%
%
For the H-SCL($\beta$) method, there is only one hyper-parameter that needs to be tuned namely $\beta$. Here, we perform a grid search of $\beta$ over the set $\{ 0.1, 0.5, 1, 2, 5\}$ and set $M$ (in Eq. \eqref{eq:NCE_loss_exp})  equal to the batch size minus $2$ {following the standard implementations \cite{chen2020simple,robinson2021contrastive,khosla2020supervised}}.
We used NVIDIA A100 32 GB GPU for computations and it takes about 10 hours to train one model (200 epochs) for each dataset. 
Since labeled STL10 is a small dataset, we repeated our experiment five times on this dataset and reported the average accuracy together with its standard deviation.

\textbf{Results: } 
\label{sec: 6-4}
Table~\ref{tbl:image} compares the best (after grid search for the best parameters) accuracies of UCL, H-UCL, SCL, H-SCL($\beta$) and H-SCL($\tau$) attained on four image datasets. As seen, both H-SCL($\beta$) and H-SCL($\tau$) methods are better than the baseline for most cases. H-SCL($\beta$) consistently outperforms other methods with margins of at least 3\% points on CIFAR100 and 4\% points. However, H-SCL($\beta$) is just slightly better than SCL on CIFAR10. The accuracies of the tested methods on CIFAR100 as a function of epochs is shown in Fig.~\ref{fig: 1}. We can observe that H-SCL($\beta$) only requires less than 50 epochs to achieve the same accuracy as SCL at 200 epochs.

\subsection{Verification of Assumption \ref{asp:key} and Lemma \ref{lemma:hucl-scl-inequality}}
Note that Lemma \ref{lemma:hucl-scl-inequality} requires that the positive sampling distribution be the same for H-SCL and H-UCL. To ensure this we use both augmentation and label information to generate the positive samples for H-UCL and H-SCL.
In order to verify Assumption \ref{asp:key}, we compute the fraction of anchors that satisfy the Assumption~\ref{asp:key} at the end of each epoch on the CIFAR100 dataset using both the H-SCL$(\tau)$ and H-SCL$(\beta)$ methods and plot the fraction against epochs in Fig.~\ref{fig:asp}. Our results indicate that, for both methods, this assumption is satisfied for over 95\% of all anchors across all epochs.

Finally, Fig.~\ref{fig:loss} empirically confirm the correctness of Lemma~\ref{lemma:hucl-scl-inequality}. For both H-SCL$(\tau)$ and H-SCL$(\beta)$ methods, $\mathcal{L}_{\textup{\tiny H-SCL}}$ (orange curves) is always upper bounded by $\mathcal{L}_{\textup{\tiny H-UCL}}$ (blue curves). However, the relationship between $\mathcal{L}{\textup{\tiny H-SCL}}$ and $\mathcal{L}{\textup{\tiny UCL}}$ is not consistent. {Specifically, as shown in the first two subplots in Fig.~\ref{fig:loss}, UCL loss (green curve) is less than H-SCL loss (orange curve) in the earlier epochs and is greater than the H-SCL loss in the later epochs.}


\subsection{Graph dataset}

We also applied our method to learn graph representations on five graph datasets: MUTAG, ENZYMES, PTC, IMDB-BINARY, and IMDB-MULTI by \cite{morris2020tudataset}. We employ InfoGraph \cite{sun2019infograph} as a baseline {UCL} method.


\textbf{Training Procedure:} As the H-SCL$(\beta)$ method is consistently better than the H-SCL$(\tau)$ method, for the graph dataset, we only conduct the simulation using the H-SCL$(\beta)$ method. 
%
We search for the best values of $\beta$ over the set $\{1,2,10\}$, which is also used in \cite{robinson2021contrastive}.
%
%
We report the the accuracy for the best value of $\beta$ for each dataset. All models are trained for 200 epochs and we use the Adam optimizer with a learning rate $0.01$. We used the 3-layer GIN \cite{xu2018how} for the representation function with a representation dimension equal to $32$.
Then we train an SVM classifier based on the learned graph-embedding. Each model is trained 10 times with 10-fold cross-validation.

\textbf{Result: } We report the performance accuracy of the different methods in Table~\ref{tbl:graph} with boldface numbers indicating the best performance for each dataset. We observe that H-SCL($\beta$) is consistently better than other methods across 5 datasets.


\begin{table}[!htp] 
\caption{\label{tbl:graph} Accuracy on graph datasets.}
\begin{center}
\begin{tabular}{|c|ccccc|} 
\hline
 Method &  MUTAG & ENZYMES & PTC & IMDB-B & IMDB-M \\ \hline
 UCL~\cite{sun2019infograph}  & 86.8 & 50.4 & 55.3 & 72.2 & 49.6 \\ \hline
 H-UCL~\cite{robinson2021contrastive}& \textbf{87.2}& 50.4 & 57.3& 72.8 & 49.6 \\ \hline 
 SCL  & 86.9& 50.4& 55.8& 72.4& {49.9}\\ \hline
 H-SCL ($\beta$)& \textbf{87.2}& \textbf{50.7}& \textbf{57.7}& \textbf{73.0}& \textbf{50.1}\\ \hline
\end{tabular}
\end{center}
\end{table}

\section{Conclusion and discussion}
\label{sec: conclusion} 
In this paper we introduced hard-negative supervised contrastive learning (H-SCL) which utilizes both label information and hard-negative sampling to improve downstream performance. On several real-world datasets we empirically demonstrated that H-SCL can substantially improve performance of downstream tasks compared to other contrastive learning approaches. We showed that in the asymptotic setting where the number of negative samples goes to infinity and a technical assumption, the hard unsupervised contrastive learning loss upper bounds the hard supervised contrastive learning loss. Our future work aims to weaken the technical assumption that is required for this relationship to hold true. We further aim to establish similar results in the non-asymptotic setting having a finite number of negative samples.

\section{Acknowledgements}
This research is supported by NSF:DRL:1931978.

\bibliographystyle{IEEEtran}
\bibliography{IJCNN}

\end{document}